\documentclass[a4paper,fleqn,11pt]{cas-sc}
\usepackage[authoryear,longnamesfirst]{natbib}
\usepackage{verbatim} 
\usepackage{amsthm}
\usepackage{times}
\usepackage{amsmath,amsfonts,amssymb}
\usepackage{graphicx}
\usepackage{booktabs}
\usepackage{url}
\usepackage{algorithm}
\usepackage{algpseudocode}
\usepackage{xcolor}
\usepackage{float}
\usepackage{setspace}
\theoremstyle{plain}  
\newtheorem{definition}{Definition}
\newtheorem{proposition}{Proposition}
\title[mode=title]{Structural Segmentation of the Minimum Set Cover Problem:
Exploiting Universe Decomposability for Metaheuristic Optimization}

\begin{document}
\let\WriteBookmarks\relax
\def\floatpagepagefraction{1}
\def\textpagefraction{.001}
\shorttitle{Exploiting Universe Decomposability in MSCP}
\author[1]{Isidora Hern\'andez}

\author[1]{H\'ector Ferrada}[orcid=0000-0002-8334-4540]
\ead{hector.ferrada@uach.cl}
\cormark[1]                
\cortext[cor1]{Corresponding author}  

\author[1]{Crist\'obal A. Navarro}[orcid=0000-0001-7090-9904]

\affiliation[1]{organization={Austral University of Chile},
               city={Valdivia},
               country={Chile}}

\maketitle
\begin{abstract}
The Minimum Set Cover Problem (MSCP) is a classical NP-hard combinatorial optimization problem with numerous applications in science and engineering. Although a wide range of exact, approximate, and metaheuristic approaches have been proposed, most methods implicitly treat MSCP instances as monolithic, overlooking potential intrinsic structural properties of the universe.
In this work, we investigate the concept of \emph{universe segmentability} in the MSCP and analyze how intrinsic structural decomposition (universe segmentability) can be exploited to enhance heuristic optimization. We propose an efficient preprocessing strategy based on disjoint-set union (union--find) to detect connected components induced by element co-occurrence within subsets, enabling the decomposition of the original instance into independent subproblems. Each subproblem is solved using the GRASP metaheuristic, and partial solutions are combined without compromising feasibility.
Extensive experiments on standard benchmark instances and large-scale synthetic datasets show that exploiting natural universe segmentation consistently improves solution quality and scalability, particularly for large and structurally decomposable instances. These gains are supported by a succinct bit-level set representation that enables efficient set operations, making the proposed approach computationally practical at scale.
\end{abstract}

\section{Introduction}

The Minimum Set Cover Problem (MSCP) is a fundamental combinatorial optimization problem with applications in crew scheduling \cite{Caprara1997}, network design \cite{BramelSimchiLevi1997}, routing \cite{BramelSimchiLevi1997}, data mining \cite{LanDePuy2007}, resource allocation \cite{BautistaPereira2007}, and so on. 
Formally, given a universe of elements $\mathcal{X}$ and a family of subsets $\mathcal{F}$ whose union covers $\mathcal{X}$, the objective is to select the minimum number of subsets of $\mathcal{F}$ whose union also covers all the elements of $\mathcal{X}$. 
Figure~\ref{fig:example} illustrates an example.

The MSCP is NP-hard, and even its unicost variant, where all subsets have identical costs, admits no polynomial-time exact algorithm unless $P = NP$ \cite{GareyJohnson1979}.
Due to this computational complexity, a vast body of research has focused on the development of approximation algorithms, heuristics, and metaheuristics capable of producing high-quality solutions within reasonable computational time. Classical greedy algorithms \cite{Chvatal1979}, Lagrangian relaxation and dual-based approaches \cite{BalasHo1980,FisherKedia1990}, and modern metaheuristics such as GRASP \cite{Feo1995grasp,ResendeRibeiro2010}, ant colony optimization \cite{Ren2010}, and electromagnetism-inspired heuristics \cite{NajiAzimi2010} have demonstrated strong empirical performance across a wide range of benchmark instances. Most of these proposals include standard reference sets for experimental evaluation drawn from the OR-Library \cite{Beasley1990}.

\begin{figure}
\centering
\includegraphics[width=0.55\textwidth]{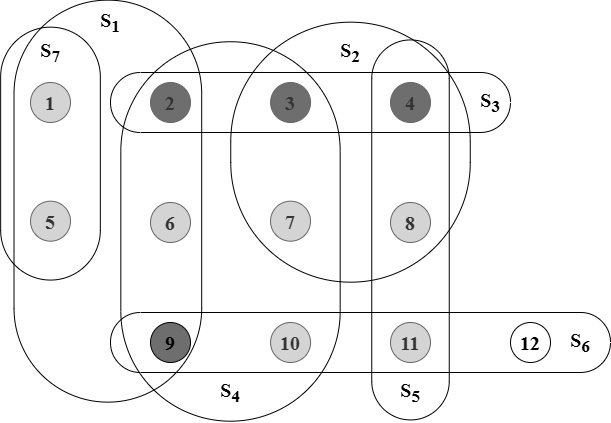}
\caption{Illustration of an instance $(\mathcal{X}, \mathcal{F})$ of the MSCP, with $|\mathcal{X}|=12$ elements and $\mathcal{F}=\{S_1,\ldots,S_7\}$, where $S_1 = \{1, 2, 5, 6, 9\}, S_2 = \{3, 4, 7, 8\}, S_3 = \{2, 3, 4\}, S_4 = \{2, 3, 6, 7, 9, 10\}, S_5 = \{4, 8, 11\}, S_6 = \{9, 10, 11, 12\}, S_7 = \{1, 5\}$. The optimal solution for this example is the minimum set-covering $\mathcal{C^*}=\{S_1,S_2,S_6\}$.}
\label{fig:example}
\end{figure}

Despite these advances, most existing approaches implicitly assume that MSCP instances are structurally indivisible and must be solved as monolithic optimization problems. In practice, however, many instances exhibit latent structural properties that are not explicitly exploited by standard solution methods. In particular, elements of the universe may form groups that never co-occur within any subset, or whose interactions are weakly connected through a limited number of subsets. Treating such instances as indivisible problems leads to unnecessarily large search spaces and increased computational effort, especially for large-scale instances \cite{Yelbay2015}.
Despite decades of research on MSCP solution methods, the question of whether the universe itself can be systematically decomposed into independent components—while preserving feasibility—has received little attention. Existing works primarily focus on improving heuristic operators or search strategies \cite{Caprara1999,Beasley1987}, rather than exploiting intrinsic structural properties of the problem instance. This observation motivates the central research question of this article: \emph{can intrinsic structural properties of MSCP instances be leveraged to decompose the universe into independent subproblems, and does such decomposition improve heuristic solution quality and scalability?}

To address the previos question, we introduce the notion of \emph{universe segmentability} in the MSCP. We model interactions between elements through co-occurrence relationships induced by the subsets and propose an efficient preprocessing strategy based on disjoint-set union (union--find) \cite{GalilItaliano1991} to identify connected components within the universe. When such components exist, the original MSCP instance can be decomposed into independent subinstances that may be solved separately.
We integrate this segmentation strategy with the GRASP metaheuristic, solving each component independently and merging the resulting partial solutions into a global cover. This approach preserves feasibility by construction, reduces the effective problem size, and naturally enables parallel execution. Our implementation builds upon a succinct bit-level set representation previously proposed by Delgado et al.~\cite{DelgadoFerradaNavarro2024}, which supports constant-time set intersection and coverage operations. While not a contribution of this work, this representation plays a crucial role in making the proposed segmentation strategy computationally efficient and scalable.
An alternative segmentation strategic based on artificial or forced partitioning was also explored; however, the tested approach, which involves forcing a partition of the universe into two balanced groups, does not produce improvements in performance,  it is presented in the appendix as negative result for future refinement.

The main contributions of this article are as follows: we formalize the concept of universe segmentability in the MSCP and propose an efficient union-find-based method to detect intrinsic structural decomposition; we demonstrate how natural universe segmentation can be seamlessly integrated into a metaheuristic framework without altering solution feasibility; through extensive experimental evaluation on benchmark instances from the OR-Library \cite{Beasley1990}, we demonstrate that exploiting intrinsic structural properties significantly improves solution quality and scalability for large MSCP instances; and we initiate the search for improved solutions from strong initial approximations while leveraging HPC techniques ---including CPU parallelism, succinct data representations, and bitwise operations--- to accelerate the overall solution process.

The remainder of the article is structured as follows. The state of the art and related work are reviewed in Section \ref{sec:related}. Section \ref{sec:segmentation} describes the properties that we will exploit in Section \ref{sec:grasp}, where we detail our algorithm. Section \ref{sec:results} presents the experiments conducted using both our own datasets and traditional datasets from the literature, with the aim of evaluating the algorithm's performance.
Finally, Section \ref{sec:conclusions} analyzes and discusses the results obtained and presents the conclusions of the study and future work.

\section{Related Work}
\label{sec:related}

The Minimum Set Cover Problem (MSCP) has been extensively studied in the literature due to its theoretical relevance and broad range of practical applications. Existing solution approaches can be broadly categorized into exact algorithms, approximation and greedy-based heuristics, and metaheuristic frameworks. In addition, several preprocessing and reduction techniques have been proposed to improve scalability, although structural decomposition of the universe has received comparatively little attention.

\subsection*{Exact algorithms and mathematical programming approaches}

Exact algorithms for the MSCP aim at guaranteeing optimality, but their applicability is inherently limited by the NP-hard nature of the problem \cite{GareyJohnson1979}. Early and influential work by Beasley \cite{Beasley1987} introduced algorithms combining problem reduction tests, subgradient optimization, and linear programming relaxations, establishing a benchmark for exact MSCP solvers. Related approaches based on cutting planes \cite{BalasHo1980} and branch-and-bound schemes \cite{BalasCarrera1996} further improved computational performance by exploiting dual information~\cite{FisherKedia1990}.
Although these methods guarantee optimal solutions, through exhaustive exploration with pruning heuristics, the exponential growth of the search space severely restricts the scalability of exact algorithms, rendering them impractical for large-scale instances commonly encountered in real-world applications. As a consequence, much of the subsequent research has focused on heuristic and approximate solution methods.

\subsection*{Greedy and approximation-based heuristics}

Among approximation approaches, Chvátal’s greedy algorithm \cite{Chvatal1979} remains one of the most influential methods for the MSCP, due to its simplicity, polynomial-time complexity, and optimal logarithmic approximation guarantee. The greedy paradigm has inspired numerous refinements aimed at improving empirical solution quality while preserving computational efficiency.

Recent developments include greedy heuristics enhanced with probabilistic or information-theoretic mechanisms, such as the surprisal-based greedy strategy proposed by Adamo et al.~\cite{Adamo2023}, which adapts selection criteria based on the information content of uncovered elements. Other improvements focus on data representation and algorithmic efficiency, for example Delgado et al.~\cite{DelgadoFerradaNavarro2024} introduce a greedy-based algorithm that exploits the incidence structure between elements and subsets to guide the selection process. Rather than selecting subsets directly from $\mathcal{F}$, the method prioritizes elements according to their degree, defined as the number of subsets that cover them, and evaluates candidate subsets by progressively covering elements with smaller degrees first.
This strategy aims to prioritize sparsely covered elements, reducing the risk of delaying the coverage of critical elements with few available covering subsets, thereby leading to solutions with improved cardinality. A succinct bit-level representation is employed to support efficient set operations and scalability.

These approaches demonstrate that even within the greedy framework, significant performance gains can be achieved by leveraging structural and computational insights.

\subsection*{Metaheuristics for the MSCP}

To overcome the limitations of deterministic heuristics, a wide variety of metaheuristic algorithms have been proposed for the MSCP. These methods aim to balance exploration and exploitation in large and complex search spaces, often yielding high-quality solutions for large instances. Representative examples include genetic algorithms \cite{Beasley1987}, ant colony optimization \cite{Ren2010}, electromagnetism-inspired metaheuristics \cite{NajiAzimi2010}, and GRASP-based approaches \cite{Feo1995grasp,ResendeRibeiro2010}.

Several GRASP variants have been specifically tailored to the MSCP, including the unicost formulation \cite{BautistaPereira2007}. For instance, the more recent scheme of Reyes and Araya \cite{ReyesAraya2021} incorporates a divide-and-conquer strategy that randomly splits the solution in each iteration into two halves and then makes each one a feasible solution.

While these metaheuristic approaches often produce near-optimal solutions for large MSCP instances, their performance is primarily assessed through empirical evaluation on benchmark datasets, and they generally provide limited theoretical guarantees on solution quality. This is particularly true for GRASP-based and hybrid methods, whose effectiveness strongly depends on instance structure and parameter tuning \cite{Feo1995grasp,ResendeRibeiro2010}.

\subsection*{Preprocessing, reduction, and structural properties}

Beyond solution algorithms, preprocessing and reduction techniques have been explored as a means to simplify MSCP instances prior to optimization. Classical reduction tests are commonly embedded within exact solvers \cite{Beasley1987}, and dual-based information has been shown to significantly impact heuristic performance \cite{Yelbay2015}. However, these techniques primarily aim at eliminating dominated subsets or redundant elements, rather than exploiting deeper structural properties of the universe.

In contrast, the idea of decomposing a combinatorial optimization problem into independent subproblems has been successfully applied in other domains, such as vehicle routing and crew scheduling \cite{Caprara1997}. Nevertheless, for the MSCP, the question of whether the universe itself can be naturally partitioned into independent components—while preserving feasibility—has received little systematic attention. Most existing approaches implicitly treat MSCP instances as structurally indivisible, solving them as monolithic problems.

This gap in the literature motivates the contribution of the present work, which investigates intrinsic structural properties of MSCP instances and their potential to enable principled universe decomposition as a preprocessing step within a metaheuristic framework.

\section{Structural Segmentation of the MSCP}
\label{sec:segmentation}

In this section, we introduce the concept of \emph{structural segmentation} of the Minimum Set Cover Problem (MSCP). The key idea is to exploit intrinsic relationships among elements of the universe induced by subset co-occurrences, with the goal of identifying independent subinstances that can be solved separately without compromising feasibility.

\subsection{Universe Segmentability}

Let $\mathcal{X}$ denote the universe of elements and $\mathcal{F} = \{S_1, \ldots, S_m\}$ the family of subsets whose union covers $\mathcal{X}$. Two elements $x_i, x_j \in \mathcal{X}$ are said to \emph{co-occur} if there exists at least one subset $S_k \in \mathcal{F}$ such that $\{x_i, x_j\} \subseteq S_k$.
We model the co-occurrence relationships among elements by defining an undirected graph $G = (\mathcal{X}, E)$, where each vertex corresponds to an element of the universe, and an edge $(x_i, x_j) \in E$ exists if and only if $x_i$ and $x_j$ co-occur in at least one subset. An illustration of this construction is shown in Fig.\ref{fig:cooccGraph}
\begin{definition}[Universe Segmentability]
An MSCP instance $(\mathcal{X}, \mathcal{F})$ is said to be \emph{segmentable} if the co-occurrence graph $G$ is disconnected. Each connected component of $G$ then defines a subset of elements that is independent of the others.
\end{definition}

\begin{figure}
\begin{center}
\includegraphics[width=0.9\textwidth]{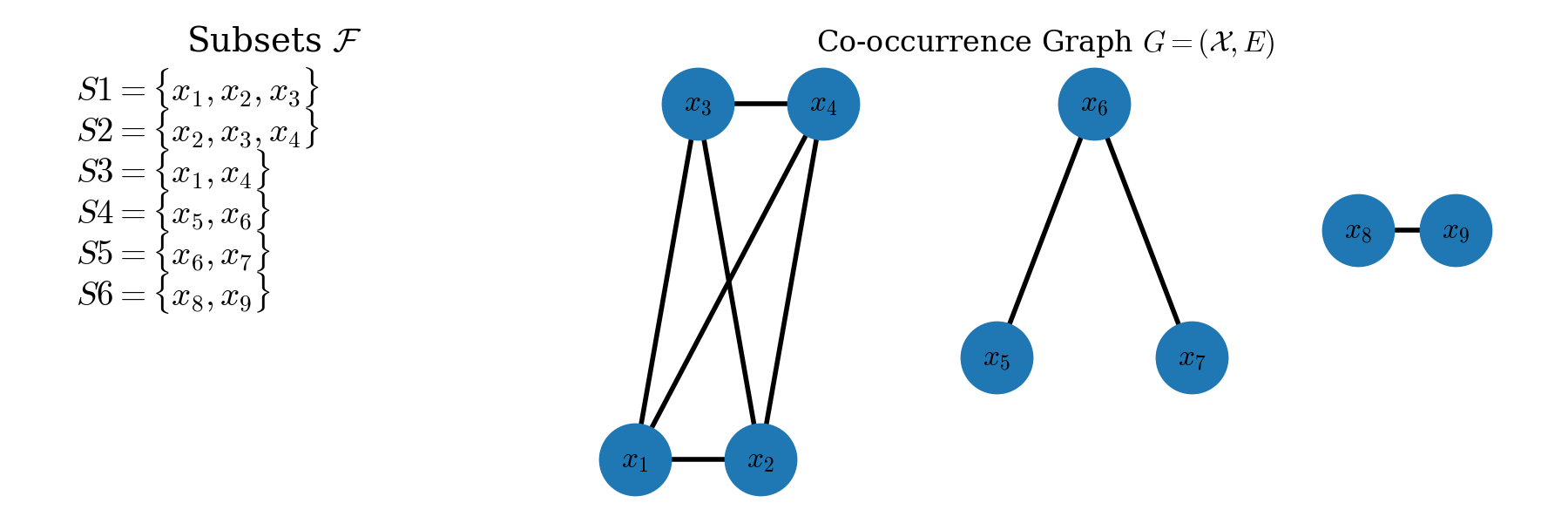}
\end{center}
\vspace*{-5mm}
\caption{Illustrative example of a MSCP instance. (Left) Family of subsets $\mathcal{F}$ defined over the universe $\mathcal{X}$. (Right) Co-occurrence graph induced by $\mathcal{F}$, where an edge connects two  elements if they appear together in at least one subset. The graph decomposes into three connected components, motivating universe segmentation into  independent subinstances.}
\label{fig:cooccGraph}
\end{figure}

Let $\mathcal{X}_1, \ldots, \mathcal{X}_k$ denote the connected components of $G$. For each component $\mathcal{X}_i$, we define a corresponding subfamily $\mathcal{F}_i = \{ S \in \mathcal{F} \mid S \cap \mathcal{X}_i \neq \emptyset \}$. Fig.~2 conceptually illustrates the decomposition of an MSCP instance into independent subinstances.

\subsection{Feasibility Preservation}

The following result establishes that segmentation preserves feasibility.
\begin{proposition}
Let $(\mathcal{X}, \mathcal{F})$ be an MSCP instance whose co-occurrence graph decomposes into connected components $\mathcal{X}_1, \ldots, \mathcal{X}_k$. For each $i$, let $\mathcal{C}_i \subseteq \mathcal{F}_i$ be a feasible cover of $\mathcal{X}_i$. Then $\mathcal{C} = \bigcup_{i=1}^k \mathcal{C}_i$ is a feasible cover of $\mathcal{X}$.
\end{proposition}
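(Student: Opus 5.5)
The plan is a direct element-chasing argument. Two things must be checked: that $\mathcal{C}$ is a legitimate selection of subsets from $\mathcal{F}$, and that every element of $\mathcal{X}$ is covered by some member of $\mathcal{C}$.

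\textbf{Membership.} Each $\mathcal{F}_i$ is by definition a subfamily of $\mathcal{F}$, and $\mathcal{C}_i \subseteq \mathcal{F}_i$. Hence $\mathcal{C} = \bigcup_{i=1}^k \mathcal{C}_i \subseteq \mathcal{F}$.

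\textbf{The components partition $\mathcal{X}$.} This is the one structural fact needed. The connected components of the graph $G = (\mathcal{X}, E)$ partition its vertex set, so $\mathcal{X} = \mathcal{X}_1 \cup \cdots \cup \mathcal{X}_k$ with the $\mathcal{X}_i$ pairwise disjoint. Only the covering half, $\mathcal{X} = \bigcup_i \mathcal{X}_i$, is actually used below.

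\textbf{Coverage.} Take an arbitrary $x \in \mathcal{X}$.
\begin{itemize}
\item Choose the index $i$ with $x \in \mathcal{X}_i$.
\item Since $\mathcal{C}_i$ is a feasible cover of $\mathcal{X}_i$, some $S \in \mathcal{C}_i$ satisfies $x \in S$.
\item Because $\mathcal{C}_i \subseteq \mathcal{C}$, this $S$ lies in $\mathcal{C}$.
\end{itemize}
Therefore $\mathcal{X} \subseteq \bigcup_{S \in \mathcal{C}} S$, which is feasibility.

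\textbf{Where the care lies.} There is no real obstacle; the only delicate point is the meaning of ``feasible cover of $\mathcal{X}_i$.'' I read it as $\mathcal{X}_i \subseteq \bigcup_{S\in\mathcal{C}_i} S$. The sets in $\mathcal{C}_i$ may contain elements outside $\mathcal{X}_i$, but that is harmless, since we only need to cover $\mathcal{X}$ and any $S \in \mathcal{F}$ is already contained in $\mathcal{X}$.

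\textbf{Optional remark.} I might add that, because all elements of a subset $S$ pairwise co-occur, $S$ lies entirely inside a single component. So the $\mathcal{F}_i$ are pairwise disjoint and $\mathcal{C}$ is a disjoint union, giving $|\mathcal{C}| = \sum_i |\mathcal{C}_i|$. This is not needed for feasibility, but it justifies solving each subinstance independently.
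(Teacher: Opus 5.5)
Your proof is correct and follows the same element-chasing argument as the paper: each $x\in\mathcal{X}$ lies in some component $\mathcal{X}_i$, is covered by a set in $\mathcal{C}_i$, and $\mathcal{C}_i\subseteq\mathcal{C}$. Your extra checks are valid and slightly more careful than the paper, though not needed for feasibility: that $\mathcal{C}\subseteq\mathcal{F}$, and that every nonempty $S$ lies inside a single component, so the $\mathcal{F}_i$ are pairwise disjoint.
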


\begin{proof}
By construction, every element $x \in \mathcal{X}$ belongs to exactly one connected component $\mathcal{X}_i$. Since $\mathcal{C}_i$ covers all elements of $\mathcal{X}_i$, $x$ is covered by at least one subset in $\mathcal{C}_i \subseteq \mathcal{C}$. Therefore, all elements of $\mathcal{X}$ are covered by $\mathcal{C}$.
\end{proof}

This result implies that solving each segmented subinstance independently and merging their solutions yields a valid solution to the original MSCP.

\subsection{Union--Find-Based Decomposition}
To efficiently identify the connected components of the co-occurrence graph, we employ a disjoint-set union (union--find) data structure \cite{GalilItaliano1991}.
Initially, each element $x \in \mathcal{X}$ is placed in its own set. For each subset $S \in \mathcal{F}$, all elements of $S$ are merged into the same set using union operations.
After processing all subsets, the resulting disjoint sets correspond exactly to the connected components of the co-occurrence graph. Fig.\ref{fig:cooccGraph} (right) illustrates the final result of the union--find process on a small MSCP instance. 
Initially, the nine elements of the universe were considered as nine independent groups (or trees), with no connections between them. After processing all the subsets, the elements that belong to the same connected component define independent segments of the universe.
Using union by rank and path compression—standard heuristics that respectively attach shallower trees to deeper ones and flatten the structure of the disjoint sets during find operations \cite{GalilItaliano1991}—the total time complexity of this preprocessing step is $O\!\left(\sum_{S \in \mathcal{F}} |S| \cdot \alpha(|\mathcal{X}|)\right)$, where $\alpha(\cdot)$ is the inverse Ackermann function which has near-constant complexity. The space complexity is linear in $|\mathcal{X}|$.
This preprocessing step is lightweight and can be applied prior to any optimization algorithm, making it suitable for large-scale MSCP instances.

\section{Integration with GRASP}
\label{sec:grasp}

The segmentation strategy described above is integrated into a Greedy Randomized Adaptive Search Procedure (GRASP) framework~\cite{Feo1995grasp,ResendeRibeiro2010}, and is further enhanced by incorporating the succinct set representation and bit-level set operations proposed by Delgado et al.~\cite{DelgadoFerradaNavarro2024}. This integration enables an efficient manipulation of candidate solution components during both the construction and local search phases. GRASP is an iterative metaheuristic that alternates between randomized greedy construction phases and solution improvement steps. The performance of the proposed approach is experimentally evaluated and compared against a classical Greedy algorithm, as illustrated in Figure~\ref{fig:grasp_rdp}. Algorithm~\ref{alg:grasp} outlines the overall sequential GRASP procedure, which relies on the randomized construction routine presented in Algorithm~\ref{alg:randSC} to generate partial and complete solutions.

Within this framework, the construction phase is carried out by Algorithm~\ref{alg:randSC}, which builds a solution iteratively by selecting candidate subsets according to a randomized greedy criterion. A key element of this interaction is the boolean flag \texttt{improve}, which regulates the balance between intensification and diversification during construction. This flag is queried by the procedure \textsc{FindBestCandidate} within Algorithm~\ref{alg:randSC}, but it is updated exclusively at the GRASP level in Algorithm~\ref{alg:grasp}, based on whether the solution obtained in the previous iteration improves upon the incumbent one.

When no improvement is observed in the previous GRASP iteration (\texttt{improve} = \texttt{False}), the construction phase promotes diversification by randomly selecting a subset with probability proportional to $(1 - f(S_i))$, where $f(S_i)$ denotes the evaluation score of the subset $S_i$, and $f$ is one of the functions in the equation (\ref{eq:evaluation-functions}). Conversely, when an improvement has been achieved (\texttt{improve} = \texttt{True}), the construction behaves greedily and deterministically selects the best-ranked candidate according to the evaluation function $f$, as in a standard greedy heuristic. This mechanism allows the algorithm to adapt its level of randomness dynamically, favoring exploration after stagnation and intensification when progress is being made.
The \texttt{improve} flag is updated at each GRASP iteration when comparing the newly constructed solution against the current best solution, while the construction routine itself remains stateless with respect to this decision.

\begin{algorithm}
\footnotesize
\caption{\textsc{RandSuccinctSC}: Randomized greedy construction procedure for the Minimum Set Cover Problem using a succinct set representation.}
\label{alg:randSC}
\begin{algorithmic}[1]
\Require Incomplete set cover $\mathcal{C}$, universe $\mathcal{X}$, \textsc{RowMap} structure, and \texttt{improve} flag
\Procedure{RandSuccinctSC}{$\mathcal{C}, \mathcal{X}, \textsc{RowMap}, \texttt{improve}$}
\State $\mathcal{U} \leftarrow \mathcal{X}$
\While{$\mathcal{U} \neq \varnothing$}
  \State $\textsc{Subsets} \leftarrow \textsc{FindSubsets}(\textsc{RowMap})$
  \State $f \leftarrow$ randomly selected evaluation function
  \State $S \leftarrow \textsc{FindBestCandidate}(\textsc{Subsets}, f, \mathcal{U}, \texttt{improve})$
  \State $\mathcal{C} \leftarrow \mathcal{C} \cup \{S\}$
  \State $\mathcal{U} \leftarrow \mathcal{U} \setminus S$
  \State $\textsc{RowMap} \leftarrow \textsc{updateRowMap}(S)$
\EndWhile
\State \Return $\mathcal{C}$
\EndProcedure
\end{algorithmic}
\end{algorithm}

Within this framework, universe segmentation is applied as a preprocessing step that decomposes the original MSCP instance into independent subinstances. The randomized greedy construction procedure described in Algorithm~\ref{alg:randSC} is then applied independently to each resulting subinstance, without modifying the greedy selection mechanism itself.

\begin{algorithm}
\footnotesize
\caption{\textsc{GRASP}: Sequential GRASP framework for the MSCP based on randomized greedy construction and local search.}
\label{alg:grasp}
\begin{algorithmic}[1]
\Require Universe $\mathcal{X}$ and family of subsets $\mathcal{F}$
\Procedure{GRASP}{$\mathcal{X}, \mathcal{F}$}
\State \texttt{num\_iter} $\gets$ number of GRASP iterations
\State \texttt{Max\_rm} $\gets$ maximum percentage of subsets to remove
\State $\textsc{RowMap} \leftarrow \textsc{createMap}(\mathcal{X}, \mathcal{F})$
\State $\textsc{Sort}(\textsc{RowMap})$
\State $\mathcal{C} \leftarrow \varnothing$
\State $\mathcal{U} \leftarrow \mathcal{X}$
\State $\mathcal{C} \leftarrow \textsc{RandSuccinctSC}(\mathcal{C}, \mathcal{U}, \textsc{RowMap}, \texttt{False})$
\State \texttt{improve} $\leftarrow$ \texttt{True}
\For{$i = 1$ \textbf{to} \texttt{num\_iter}}
  \State $\textsc{newC} \leftarrow \textsc{removeSets}(\mathcal{C}, \texttt{Max\_rm})$
  \State Update $\textsc{RowMap}$ and $\mathcal{U}$ w.r.t.\ $\textsc{newC}$
  \State $\textsc{newC} \leftarrow \textsc{RandSuccinctSC}(\textsc{newC}, \mathcal{U}, \textsc{RowMap}, \texttt{improve})$
  \State $\textsc{RemoveRedundantSets}(\textsc{newC})$
  \If{$|\textsc{newC}| < |\mathcal{C}|$}
    \State $\mathcal{C} \leftarrow \textsc{newC}$
    \State \texttt{improve} $\leftarrow$ \texttt{True}
  \Else
    \State \texttt{improve} $\leftarrow$ \texttt{False}
  \EndIf
\EndFor
\State \Return $\mathcal{C}$
\EndProcedure
\end{algorithmic}
\end{algorithm}

\subsection{Construction Phase and Evaluation Functions}
\label{sec:construction}

The construction phase follows a randomized greedy strategy in which candidate subsets are evaluated using a family of cost functions originally proposed by Lan and DePuy~\cite{LanDePuy2007}. 
At each iteration of the randomized construction procedure, an evaluation function $f$ is selected uniformly at random (Line~5 of Algorithm~\ref{alg:randSC}) and used to rank the candidate subsets.
The subset with minimum cost according to the selected function is then chosen, introducing diversification while preserving the greedy nature of the construction phase.

Let $\mathcal{U}$ denote the set of currently uncovered elements and let
$c_i = |S_i \cap \mathcal{U}|$ be the number of uncovered elements covered by subset $S_i$.
The following four evaluation functions are considered:
\begin{equation}
\label{eq:evaluation-functions}
f(S_i) \in \left\{ \frac{1}{c_i},\; \frac{1}{\sqrt{c_i}},\; \frac{1}{\log(c_i + 1)},\; \frac{1}{c_i^2} \right\}.
\end{equation}

Before executing the GRASP iterations, a preprocessing phase is applied to construct auxiliary data structures, including the element--subset incidence representation (\textsc{RowMap}) and the ordering of elements according to their degree.

\subsection{Computational Complexity of GRASP}
\label{sec:complexity}

The sequential GRASP algorithm consists of a preprocessing phase followed by an iterative optimization phase. The efficiency of both phases relies on the use of a succinct bit-level representation for set operations.

\paragraph{Preprocessing phase.}
The \textsc{RowMap} structure is constructed in $O(|\mathcal{X}|\cdot|\mathcal{F}|)$ time in the worst case. Sorting the elements of the universe by degree requires $O(|\mathcal{X}|\log|\mathcal{X}|)$ time.

\paragraph{Randomized construction phase.}
Let $W$ denote the machine word size. Using the succinct representation, set intersection, union, and difference operations can be performed in $O(|\mathcal{X}|/W)$ time. Each execution of Algorithm~\ref{alg:randSC} performs at most $\min\{|\mathcal{X}|,|\mathcal{F}|\}$ iterations. At each iteration, evaluating candidate subsets and updating coverage requires $O(|\mathcal{F}|\cdot|\mathcal{X}|/W)$ time, yielding a total construction complexity of
\[
O\!\left(\min\{|\mathcal{X}|,|\mathcal{F}|\}\cdot|\mathcal{F}|\cdot\frac{|\mathcal{X}|}{W}\right).
\]

\paragraph{GRASP iterations.}
Algorithm~\ref{alg:grasp} performs a fixed number of GRASP iterations, each dominated by a randomized construction phase. Therefore, the total running time of the sequential GRASP algorithm is dominated by repeated executions of Algorithm~\ref{alg:randSC}.

\subsection{Universe Segmentation and Independent Subinstances}

For segmentable MSCP instances, a preprocessing step can be applied to decompose the universe into independent components based on element co-occurrence. Each connected component defines a subinstance $(\mathcal{X}_i,\mathcal{F}_i)$ that can be solved independently using a standard GRASP procedure.

The partial solutions obtained for each subinstance are then merged to form a global solution. Since feasibility is preserved by construction, no additional repair or reconciliation step is required. This decomposition reduces the effective problem size and allows GRASP to focus on smaller, more structured subproblems.

\subsection{Parallel GRASP with Universe Segmentation}

In the preprocessing stage, two potentially parallelizable sections are identified. The first corresponds to the dominance of subsets (that is, redundant sets that are subsets of other subsets). The second section, called \texttt{createMap}, corresponds to the initialization of the \texttt{RowMap} structure. Furthermore, the idea behind segmentation is to form independent sub-universes that can be solved separately to obtain partial solutions to the original problem. With these considerations, a fully parallel algorithm, Algorithm 3, can be proposed.
While the standard GRASP algorithm operates on the full MSCP instance, universe segmentation enables another parallel extension of the framework. Algorithm~\ref{alg:grasp_su} presents the proposed \textsc{GRASP-SU} approach, which integrates segmentation of the universe with parallel GRASP execution.

In particular, Line~4 of Algorithm~\ref{alg:grasp_su} identifies groups of elements by applying a universe segmentation procedure that decomposes the original instance into independent components. In the main body of this work, this segmentation is performed using a union--find-based strategy that detects connected components induced by element co-occurrence. All experimental results reported in Figures~5 to~8 rely on this union--find-based segmentation.
Each resulting group is assigned to an independent GRASP execution, followed by a local search improvement phase. The partial covers obtained for all components are merged into a global solution, and a final redundancy removal step is applied to ensure minimality. Since feasibility is preserved by construction, no additional reconciliation is required.

For completeness, an alternative segmentation strategy based on forced partitioning using maximum spanning trees is analyzed separately in Appendix \ref{app:forced_partition}. This strategy replaces the union--find procedure at Line~4 of Algorithm~\ref{alg:grasp_su} and is shown to degrade performance.

\begin{algorithm}
\footnotesize
\caption{\textsc{GRASP-SU}: Parallel GRASP algorithm with universe segmentation for the MSCP.}
\label{alg:grasp_su}
\begin{algorithmic}[1]
\Require Universe $\mathcal{X}$, family of subsets $\mathcal{F}$, and number of threads $n_t$
\Procedure{GRASP-SU}{$\mathcal{X}, \mathcal{F}, n_t$}
\State $\textsc{RowMap} \leftarrow \textsc{createMap}(\mathcal{X}, \mathcal{F})$
\State $\textsc{Sort}(\textsc{RowMap})$
\State $\textsc{groups} \leftarrow \textsc{FindGroups}(\textsc{RowMap})$
\State $\textsc{SetCover} \leftarrow \varnothing$
\For{\textbf{each} \texttt{group} \textbf{in parallel}}
  \State $\mathcal{C} \leftarrow \varnothing$
  \State $\mathcal{U} \leftarrow \texttt{group.U}$
  \State $\textsc{map} \leftarrow \texttt{group.RowMap}$
  \State $\mathcal{C} \leftarrow \textsc{RandSuccinctSC}(\mathcal{C}, \mathcal{U}, \textsc{map}, \texttt{False})$
  \State $\textsc{LocalSearch}(\mathcal{C}, \mathcal{U}, \textsc{map})$
  \State $\textsc{SetCover} \leftarrow \textsc{SetCover} \cup \mathcal{C}$
\EndFor
\State $\textsc{RemoveRedundantSets}(\textsc{SetCover})$
\State \Return $\textsc{SetCover}$
\EndProcedure
\end{algorithmic}
\end{algorithm}

\subsubsection{Computational Complexity of GRASP-SU}
\label{sec:complexity_su}

Algorithm~\ref{alg:grasp_su} extends the sequential GRASP framework by exploiting universe segmentation and parallel execution. Let the segmentation produce $k$ connected components $(\mathcal{X}_1,\mathcal{F}_1),\ldots,(\mathcal{X}_k,\mathcal{F}_k)$.

The union--find-based segmentation step requires
\[
O\!\left(\sum_{S \in \mathcal{F}} |S| \cdot \alpha(|\mathcal{X}|)\right)
\]
time and $O(|\mathcal{X}|)$ space. Although asymptotically near-linear, experimental results show that this segmentation phase represents a significant fraction of the total runtime and limits overall scalability.

Each subinstance is then solved independently using Algorithm~\ref{alg:grasp}. Assuming a balanced segmentation, the parallel execution time after segmentation is dominated by the slowest subinstance:
\[
O\!\left(
\max_{1 \le i \le k}
\left\{
\min\{|\mathcal{X}_i|,|\mathcal{F}_i|\}
\cdot |\mathcal{F}_i| \cdot \frac{|\mathcal{X}_i|}{W}
\right\}
\right).
\]

Despite the segmentation overhead, substantial empirical speedups are observed for large and segmentable instances, as confirmed by the experimental results.

\section{Results and Discussion}
\label{sec:results}

This section reports the experimental evaluation of the proposed approaches. We first describe the computational environment and the implementations under comparison. All evaluated algorithms, including the Greedy baseline, GRASP, and the universe-segmentation-based variants, share the same succinct set representation and bit-level operations for manipulating subsets and uncovered elements, ensuring a fair and consistent experimental comparison.

\subsection{Setup}
\label{subsec:setup}

All experiments were conducted on a shared-memory multi-core server equipped with an AMD Ryzen\texttrademark\ Threadripper\texttrademark\ PRO~5975WX processor featuring 32 cores and 64 threads. The system is equipped with 128~GB of DDR4 RAM running at 3200~MHz, a 2~TB solid-state drive (SSD), and runs a 64-bit Arch Linux operating system with kernel version~6.2.10-arch1-1.
All algorithms were implemented in \texttt{C++} and compiled using \texttt{g++} version~9.3.0 with optimization level \texttt{-O3}. Parallel implementations rely exclusively on shared-memory parallelism via multi-threading, using OpenMP for thread management. To ensure fair comparisons, all methods employ the same succinct bit-level data structures for set representation.
Reported runtimes correspond to real (wall-clock) execution time, measured using the \texttt{std::chrono::high\_resolution\_clock}.

\subsection{Evaluated Algorithms}
\label{sec:algorithms}

We evaluate the following algorithms for the Minimum Set Cover Problem (MSCP): \textsc{Greedy}, \textsc{GRASP}, and \textsc{GRASP-SU}. All methods are implemented in \texttt{C++} using the same succinct bit-level data structures and preprocessing routines to ensure fair comparisons. The complete implementation, including source code, preprocessing scripts, and experimental setup, is publicly available in an open-source repository~\cite{mscp_repo}.

\begin{itemize}
\item \textsc{Greedy}: A deterministic greedy algorithm that iteratively selects the subset covering the largest number of currently uncovered elements. This method serves as a baseline for evaluating solution quality and runtime performance.

  \item \textsc{GRASP}: Implementation of Algorithm \ref{alg:grasp}, a sequential Greedy Randomized Adaptive Search Procedure that alternates between randomized greedy construction phases and local search improvement steps.
  
  \item \textsc{PAR-GRASP}: The parallel version of the \textsc{GRASP} implementation, parallelized as described in Section~\ref{sec:par_grasp}. 
  
\item \textsc{GRASP-SU}: A GRASP variant that integrates universe segmentation as a preprocessing step. Depending on the segmentation strategy employed, the resulting approach is referred to as \textsc{GRASP-UF} when a union--find-based method is used, or as \textsc{GRASP-MST} when a minimum spanning tree (MST)-based strategy is applied, as discussed in Appendix~\ref{app:forced_partition}. The segmentation decomposes the universe into independent subinstances, which can be solved concurrently using parallel GRASP executions. The resulting partial solutions are subsequently merged, and redundancies are removed to obtain a feasible global solution.

\end{itemize}

The forced MST-based partitioning strategy is intentionally excluded from this section and analyzed separately in Appendix~\ref{app:forced_partition} as an idea with potential that requires further research to fully assess its effectiveness.

Since the proposed approach includes configurable parameters such as \texttt{num\_iter} and \texttt{Max\_rm}, a preliminary parameter tuning phase was conducted using a trial-and-error methodology, which is commonly adopted in the context of metaheuristics where optimal parameter settings are often instance-dependent~\cite{LanDePuy2006}. 
This calibration phase was performed using the unicost instances described in Tables~\ref{tab:orlib_small} and~\ref{tab:orlib_large}, executing each instance 10 independent times and recording the best solution obtained.
We evaluated combinations of
\[
\texttt{num\_iter} \in \{200, 300, 500\}, \qquad
\texttt{Max\_rm} \in \{0.1, 0.3, 0.5, 0.7, 0.9\}.
\]

The results of this preliminary tuning phase are summarized in Table~1 (not reproduced here for brevity). 
Based on these experiments, the configuration \texttt{num\_iter} $=300$ and \texttt{Max\_rm} $=0.5$ consistently yielded the best trade-off between solution quality and computational effort across the tested instances. 
Therefore, these parameter values were fixed and used for all subsequent experiments reported in this section.

\subsection{GRASP versus Greedy on OR-Library Instances}
\label{sec:grasp_vs_greedy}

Before evaluating Algorithm~\ref{alg:grasp_su}, which summarizes the main procedure of the proposed approach, we first assess the effectiveness of the GRASP method integrated with the succinct representation and bit-level set operations introduced by Delgado et al.~\cite{DelgadoFerradaNavarro2024}, as detailed in Algorithm~\ref{alg:grasp}. Subsequently, the proposed GRASP approach is evaluated by comparing its performance against a classical Greedy algorithm on benchmark instances from the OR-Library~\cite{BeasleyORLibrary}. The Greedy algorithm provides a fast and deterministic baseline, selecting at each step the subset that covers the largest number of currently uncovered elements. In contrast, GRASP introduces randomized greedy constructions combined with local search, enabling the exploration of multiple solution trajectories.
The experiments consider unicost MSCP instances after preprocessing, which removes trivially covered elements and dominated subsets. The characteristics of the resulting instances are reported in Tables~\ref{tab:orlib_small} and~\ref{tab:orlib_large}.
The instances are grouped into two categories according to their size. Table~\ref{tab:orlib_small} reports small and medium instances from the \emph{scpe}, \emph{scpclr}, and \emph{scpcyc} families, while Table~\ref{tab:orlib_large} summarizes large-scale instances mainly drawn from the \emph{rail} and \emph{railway} families.
For each instance, the tables report the total number of elements $|\mathcal{X}|$ and subsets $|\mathcal{F}|$ in the original MSCP instance. After preprocessing, the number of elements covered by forced selections, denoted by $\mathcal{X}_{\text{cov}}$, and the number of remaining uncovered elements $\mathcal{X}_{\text{left}}$ are reported, satisfying
\[
|\mathcal{X}| = \mathcal{X}_{\text{cov}} + \mathcal{X}_{\text{uncov}}.
\]
Similarly, the tables report the number of subsets forced into the solution during preprocessing $\mathcal{F}_{\text{inc}}$, the number of excluded (discarded) subsets $\mathcal{F}_{\text{ex}}$, and the number of remaining subsets $\mathcal{F}_{\text{left}}$, such that
\[
|\mathcal{F}| = \mathcal{F}_{\text{inc}} + \mathcal{F}_{\text{ex}} + \mathcal{F}_{\text{left}}.
\]
Therefore, all Greedy and GRASP experiments are executed on the reduced instances defined by $(\mathcal{X}_{\text{uncov}}, \mathcal{F}_{\text{left}})$. 
The last column of each table reports the \emph{Best Known Solution} (BKS) available at~\cite{Wang2021}. 

\begin{table*}[ht!]
\centering
\footnotesize
\caption{OR-Library unicost instances after preprocessing. Columns report the number of elements and subsets before and after preprocessing, as well as the best known solution (BKS).}
\label{tab:orlib_small}
\begin{tabular}{lrrrrrrrr}
\hline
Instance & $|\mathcal{X}|$ & $\mathcal{X}_{\text{cov}}$ & $\mathcal{X}_{\text{uncov}}$ & $|\mathcal{F}|$ & $\mathcal{F}_{\text{inc}}$ & $\mathcal{F}_{\text{exc}}$ & $\mathcal{F}_{\text{left}}$ & BKS \\
\hline
scpe1     & 50    & 0 & 50    & 500  & 0 & 8  & 492  & 5  \\
scpe2     & 50    & 0 & 50    & 500  & 0 & 8  & 492  & 5  \\
scpe3     & 50    & 0 & 50    & 500  & 0 & 6  & 494  & 5  \\
scpe4     & 50    & 0 & 50    & 500  & 0 & 12 & 488  & 5  \\
scpe5     & 50    & 0 & 50    & 500  & 0 & 10 & 490  & 5  \\
scpclr10  & 511   & 0 & 511   & 210  & 0 & 0  & 210  & 25 \\
scpclr11  & 1023  & 0 & 1023  & 330  & 0 & 0  & 330  & 23 \\
scpclr12  & 2047  & 0 & 2047  & 495  & 0 & 0  & 495  & 23 \\
scpclr13  & 4095  & 0 & 4095  & 715  & 0 & 0  & 715  & 23 \\
scpcyc06  & 240   & 0 & 240   & 192  & 0 & 0  & 192  & 60 \\
scpcyc07  & 672   & 0 & 672   & 448  & 0 & 0  & 448  & 144 \\
scpcyc08  & 1792  & 0 & 1792  & 1024 & 0 & 0  & 1024 & 344 \\
scpcyc09  & 4608  & 0 & 4608  & 2304 & 0 & 0  & 2304 & 772 \\
scpcyc10  & 11520 & 0 & 11520 & 5120 & 0 & 0  & 5120 & 1792 \\
scpcyc11  & 28160 & 0 & 28160 & 11264 & 0 & 0 & 11264 & 3968 \\
\hline
\end{tabular}
\end{table*}

\begin{table*}[t]
\centering
\footnotesize
\caption{OR-Library railway instances after preprocessing. The table reports the reduction achieved in both universe size and number of subsets, together with the best known solution (BKS).}
\label{tab:orlib_large}
\begin{tabular}{lrrrrrrrr}
\hline
Instance & $|\mathcal{X}|$ & $\mathcal{X}_{\text{cov}}$ & $\mathcal{X}_{\text{uncov}}$ & $|\mathcal{F}|$ & $\mathcal{F}_{\text{inc}}$ & $\mathcal{F}_{\text{exc}}$ & $\mathcal{F}_{\text{left}}$ & BKS \\
\hline
rail507  & 507  & 18  & 489  & 63009   & 7  & 37630 & 25372 & 96  \\
rail516  & 516  & 62  & 454  & 47311   & 27 & 8657  & 38627 & 134 \\
rail582  & 582  & 17  & 565  & 55515   & 7  & 28105 & 27403 & 125 \\
rail2536 & 2536 & 32  & 2504 & 1081841 & 5  & 270162& 811674& 377 \\
rail2586 & 2586 & 129 & 2457 & 920683  & 42 & 487757& 432884& 518 \\
rail4284 & 4284 & 76  & 4208 & 1092610 & 17 & 426255& 666338& 591 \\
rail4872 & 4872 & 270 & 4602 & 968672  & 70 & 488292& 480310& 877 \\
\hline
\end{tabular}
\end{table*}

\subsubsection*{Relative Deviation Analysis}

To show the quality of the solutions, most experimental graphs show the Relative Percentage Deviation (RPD) obtained by Greedy and GRASP across all OR-Library instances, computed as $(|C|-\text{BKS})/ \text{BKS}$, where $|C|$ corresponds to the cardinality of the solution.\\

The Figure~\ref{fig:grasp_rdp} presents a comparison between the sequential \texttt{GRASP} algorithm and the \texttt{Greedy} algorithm in terms of solution cardinality and execution time for unicost instances. \texttt{GRASP} consistently achieves low RPD values, mostly below 5\%, reaching the best known solution in 11 out of the 15 evaluated instances. However, it can be observed that \texttt{GRASP} tends to perform worse on very large instances. In contrast, the \texttt{Greedy} algorithm exhibits notably higher RPD values, reaching up to nearly 40\% in certain instances, and is equal to or worse than \texttt{GRASP} in all cases.
The right-hand graph shows the execution times for both strategies. \texttt{Greedy} proves to be considerably faster, being up to two orders of magnitude more efficient in execution time than \texttt{GRASP}. Nevertheless, despite the higher computational cost the improvement in solution quality achieved by \texttt{GRASP} justifies this increase in runtime. Finally, both algorithms exhibit a clear increasing trend in execution time as instance size grows.
\begin{figure}
\centering
\includegraphics[width=0.95\textwidth]{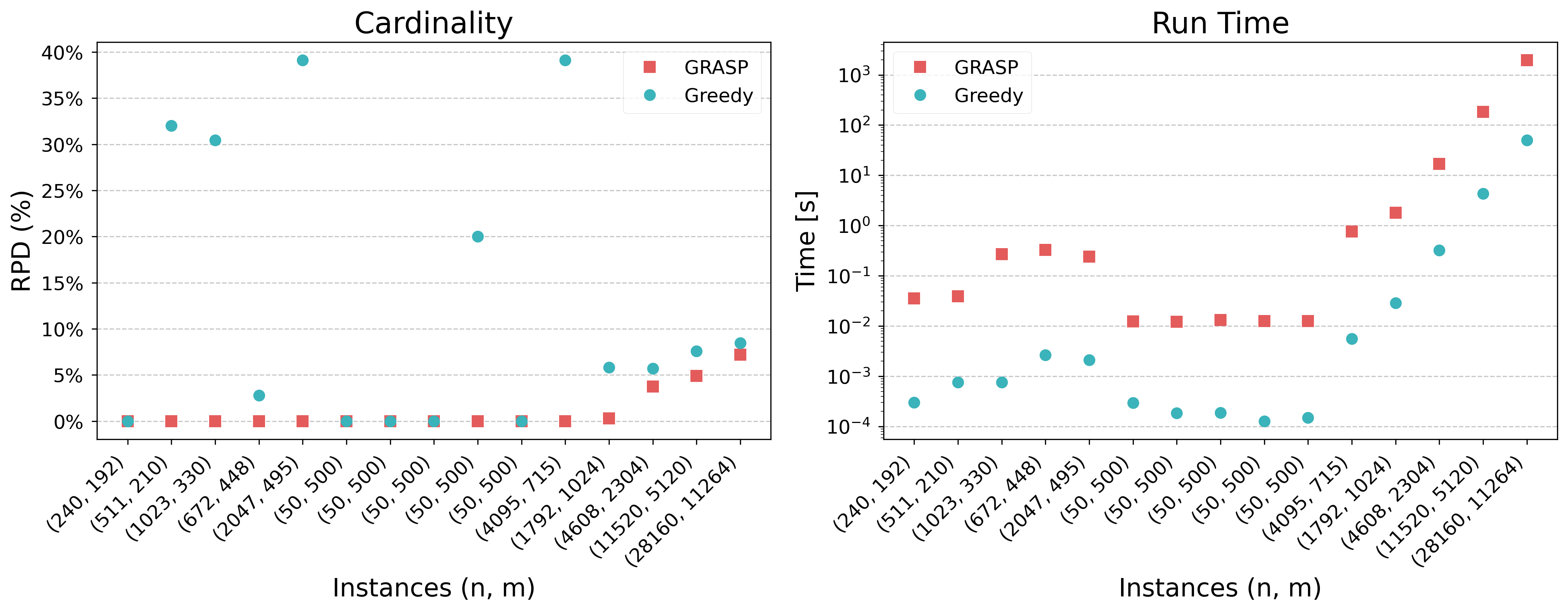}
\caption{Relative percentage deviation (RPD) obtained by Greedy and GRASP on OR-Library MSCP instances.}
\label{fig:grasp_rdp}
\end{figure}

These results demonstrate that GRASP consistently outperforms in quality the Greedy heuristic, particularly on large and structurally complex instances. Additionally, the reduction cardinality of GRASP often approaching or matching the BKS.

\subsubsection{Parallel Performance Analysis of GRASP}
\label{sec:par_grasp}

In the GRASP algorithm (Algorithm~\ref{alg:grasp}), two main sections are amenable to parallelization and form the basis of the \texttt{PAR-GRASP} variant.
The first corresponds to the \texttt{createMap} procedure (line~4), which builds the \texttt{RowMap}  structure.
The second involves sorting this structure according to element degree (line~5).
In addition, the dominance check among subsets is also parallelized; however, this operation is performed during a preprocessing phase executed prior to the GRASP iterations, and it represents a significant improvement in the parallelization of the entire process.

Figure~\ref{fig:grasp_rdp_rail} presents the performance of the algorithms on large-scale instances. For this evaluation, \texttt{PAR-GRASP} was executed using 32 threads CPU cores. In terms of cardinality (left graph), a pattern similar to that observed for USCP instances is evident. The \texttt{PAR-GRASP} algorithm maintains an RPD below 20\% in most instances. However, as instance size increases, a slight upward trend in RPD is observed, indicating a performance degradation of \texttt{PAR-GRASP} in higher-dimensional scenarios. Meanwhile, the \texttt{Greedy} algorithm again shows inferior performance, with RPD exceeding 20\% in nearly all instances and consistently more than 7\% worse than \texttt{PAR-GRASP} across all cases. Regarding execution time (right graph), the advantage of the \texttt{Greedy} approach is once again apparent, achieving results up to one order of magnitude faster than \texttt{PAR-GRASP}.

\begin{figure}
\centering
\includegraphics[width=0.95\textwidth]{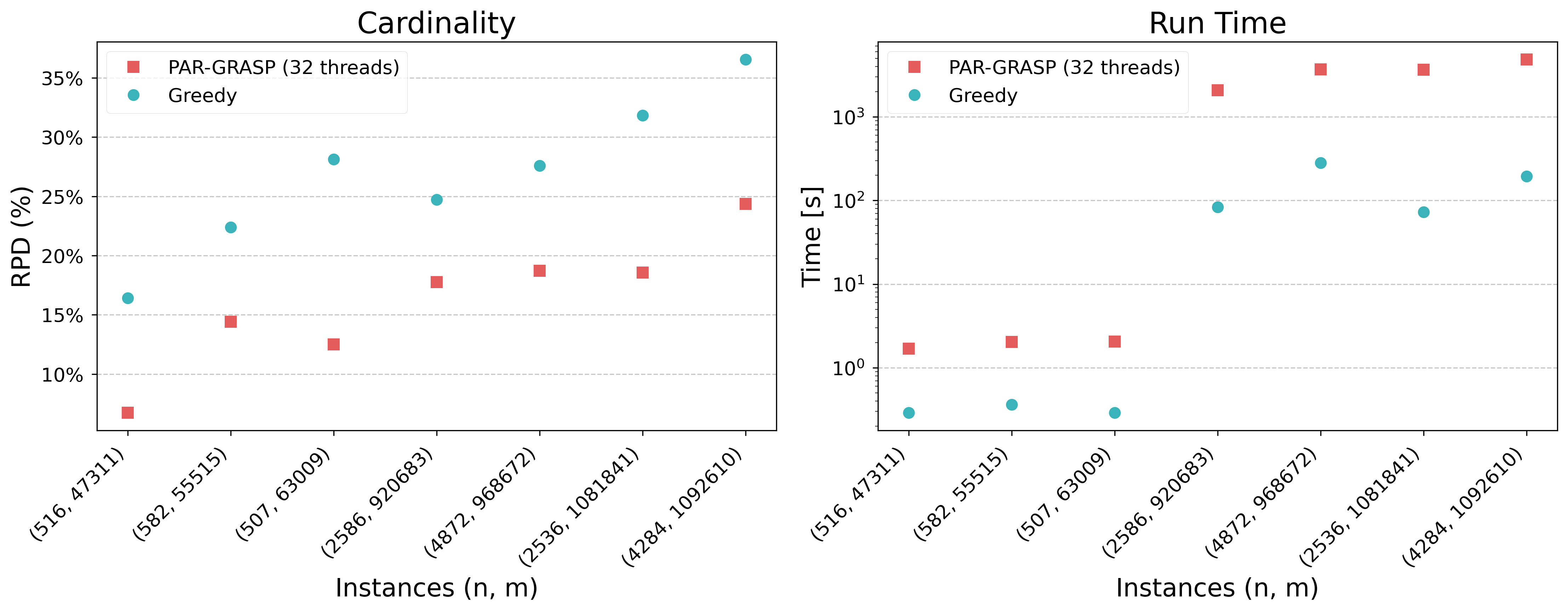}
\caption{Relative percentage deviation for railway instances, comparing Greedy and GRASP.}
\label{fig:grasp_rdp_rail}
\end{figure}

\subsection{Results for GRASP-UF}
\label{sec:grasp_uf}

The goal of universe segmentation is to identify independent sub-universes that can be solved separately, yielding partial solutions that can be combined into a feasible solution for the original problem. In this section, we analyze the performance of the universe-segmentation-based approach \textsc{GRASP-UF}, as described in Algorithm~\ref{alg:grasp_su}.

\begin{figure}
\centering
\includegraphics[width=0.95\textwidth]{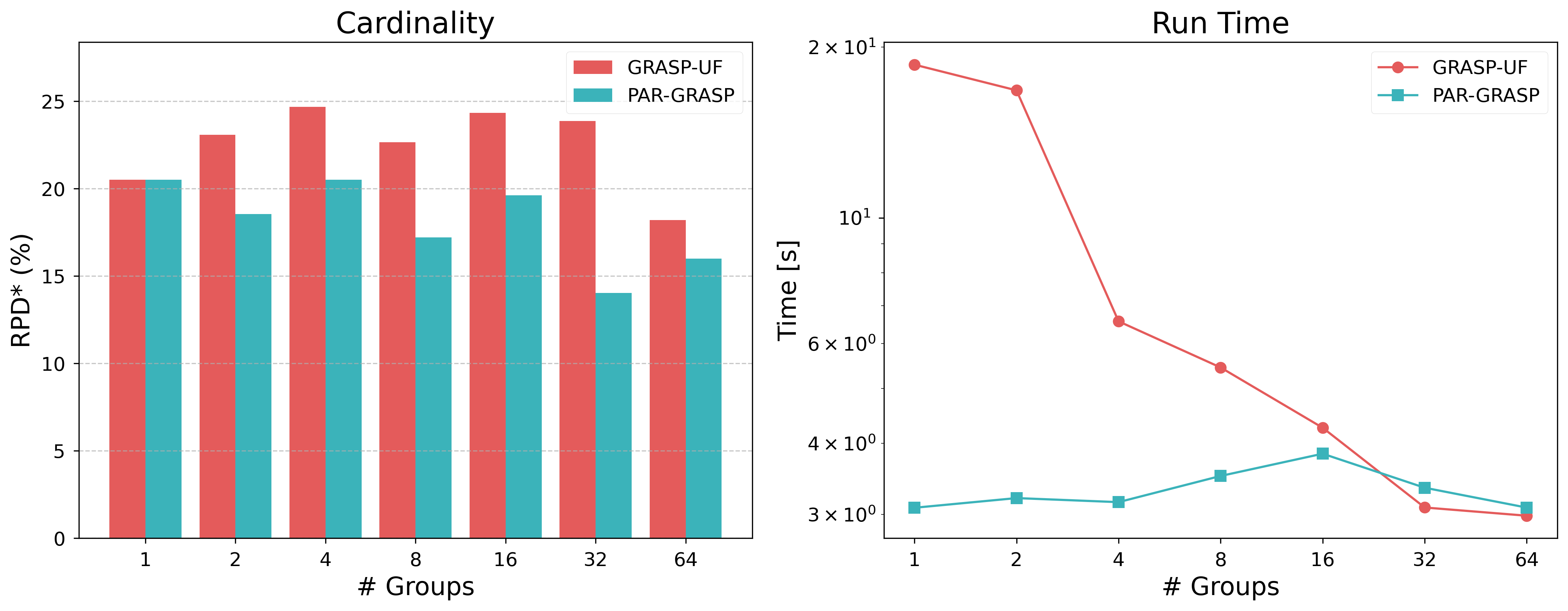}
\caption{Solution cardinality and execution time of \textsc{GRASP-UF} and \textsc{PAR-GRASP}, both executed with 32 CPU cores, on randomly generated MSCP instances of size $(n,m) = (10{,}000, 20{,}000)$.
The number of universe groups produced by segmentation varies in $\{1,2,4,8,16,32,64\}$.
The metric RPD$^\ast$ denotes the relative percentage deviation with respect to the solution cardinality obtained by the Greedy algorithm.}
\label{fig:uf_groups}
\end{figure}

For experiments based on synthetic instances, where no \emph{Best Known Solution} (BKS) is available, solution quality is assessed using the Relative Percentage Deviation (RPD) with respect to the Greedy algorithm, which serves as a deterministic baseline.
Given a solution of cardinality $|\mathcal{C}|$ produced by \textsc{GRASP-UF}, and the corresponding Greedy solution of cardinality $|\mathcal{C}_{\text{Greedy}}|$, the RPD is redefined as
$ \mathrm{RPD}^\ast =
\frac{|\mathcal{C}_{\text{Greedy}}| - |\mathcal{C}|}
{|\mathcal{C}_{\text{Greedy}}|}. $
Therefore, the higher the RPD, the better the quality of the algorithm.
This metric is used consistently in the analysis of the experimental results reported in Figures~\ref{fig:uf_groups}, ~\ref{fig:uf_4} and ~\ref{fig:uf_32}, allowing a direct comparison of solution quality across different segmentation configurations and levels of parallelism.
Figure ~\ref{fig:uf_groups} shows the number of connected components detected by the union--find preprocessing phase for different instances, and based on this number, the algorithms \textsc{PAR-GRASP} and \textsc{GRASP-UF} are compared.

Therefore, to further evaluate scalability, we generated large synthetic instances of MSCP to validate the behavior of parallel GRASP algorithms with and without independent group (or partition) searches in the universe.
Figures~\ref{fig:uf_groups} to~\ref{fig:uf_32} present results obtained on randomly generated instances.
Figure ~\ref{fig:uf_speedup} shows the time of the two main parallelization stages, indicating that universe segmentation with union--find (line 4 of Algorithm ~\ref{alg:grasp_su}) represents the largest fraction of the total execution time for all instances.
Despite this cost, segmentation significantly reduces the effective problem size, leading to overall improvements in scalability and solution quality.

\begin{figure}
\centering
\includegraphics[width=0.95\textwidth]{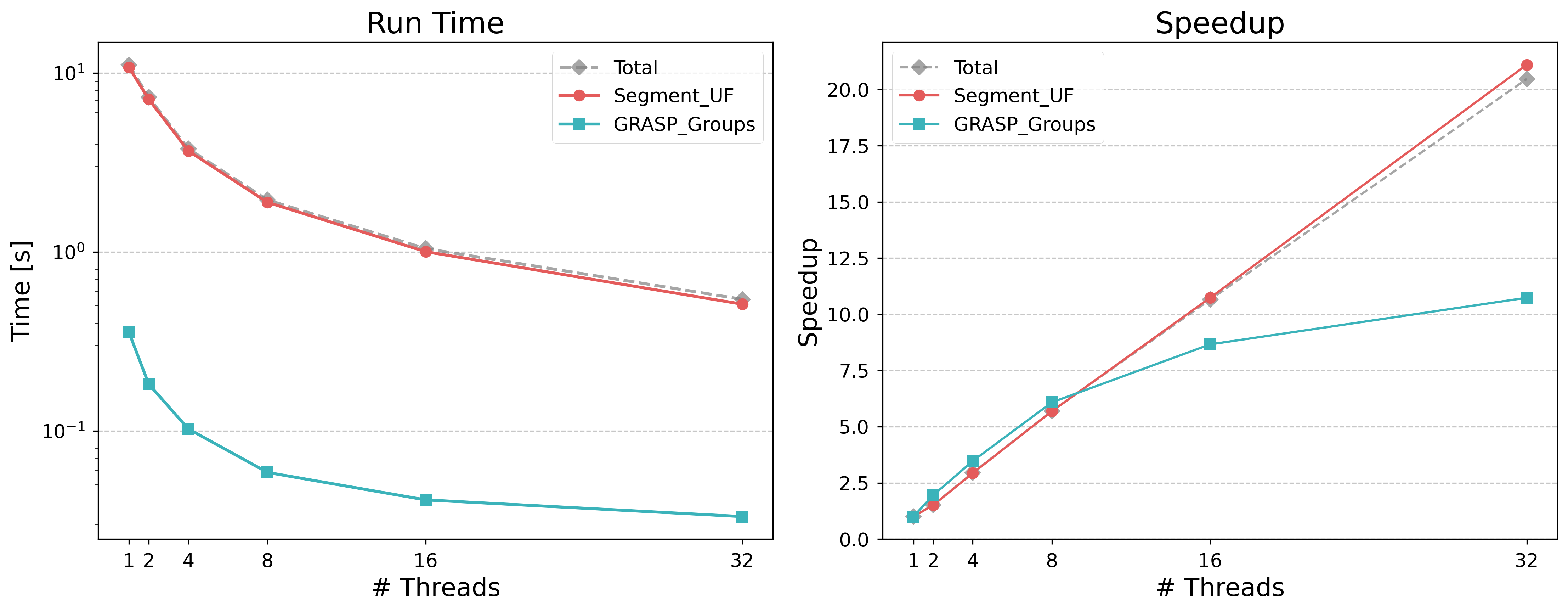}
\caption{Speedup obtained by GRASP-UF for different algorithmic phases for an instance with $(n,m) = (10{,}000, 20{,}000)$ and 32 groups using up to 32 threads.}
\label{fig:uf_speedup}
\end{figure}

\begin{figure}
\centering
\includegraphics[width=0.95\textwidth]{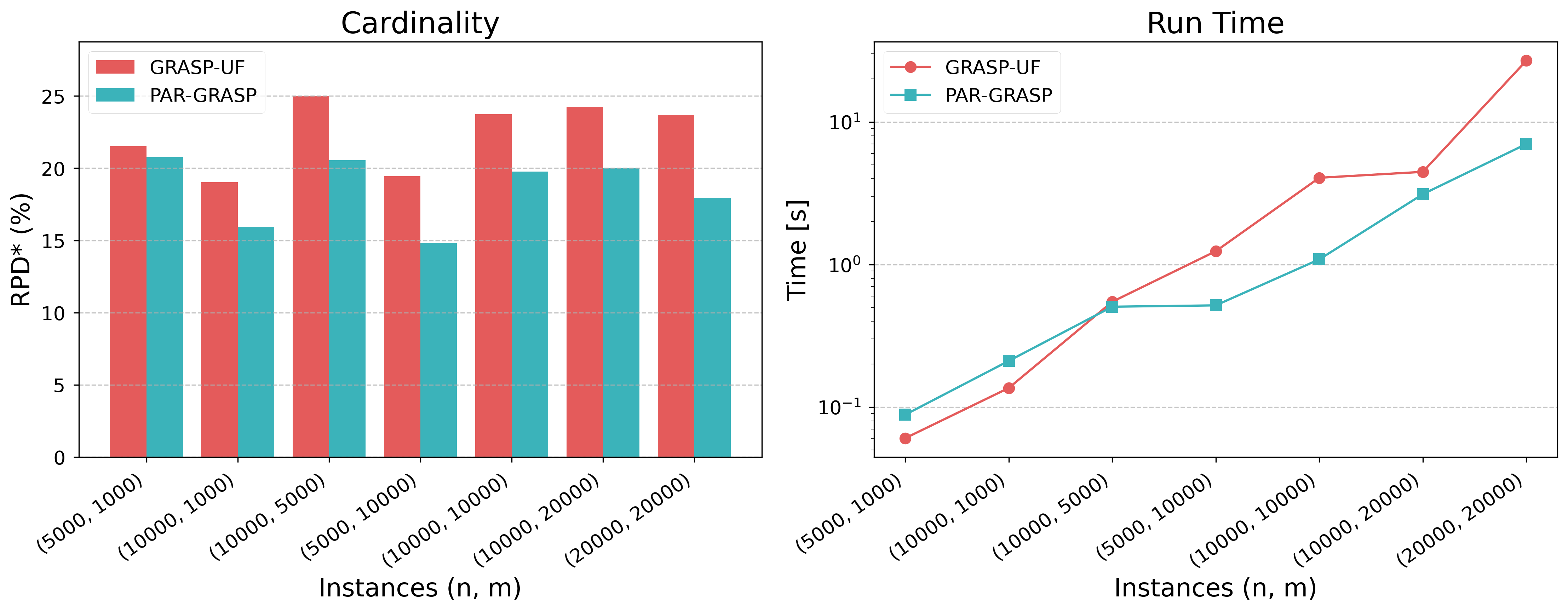}
\caption{Performance comparison of PAR-GRASP and GRASP-UF using 4 threads on synthetic MSCP instances.}
\label{fig:uf_4}
\end{figure}

\begin{figure}
\centering
\includegraphics[width=0.95\textwidth]{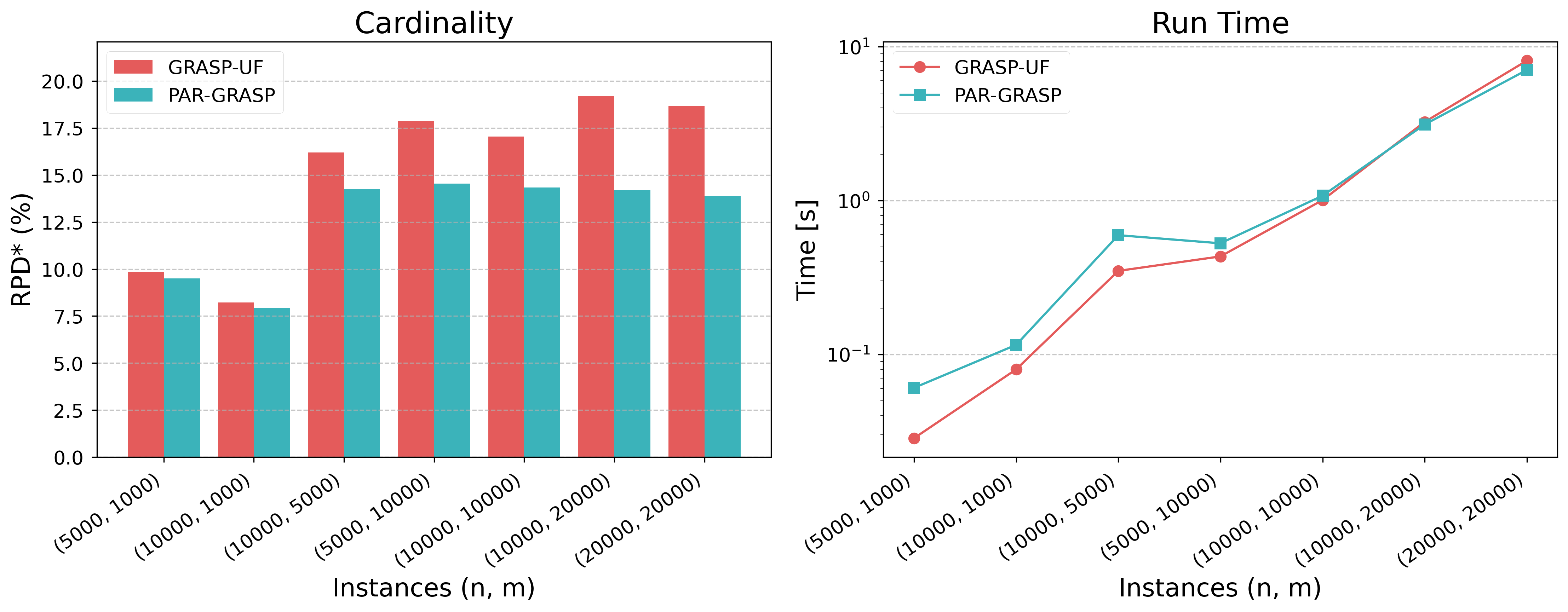}
\caption{Performance comparison of PAR-GRASP and GRASP-UF using 32 threads on synthetic MSCP instances.}
\label{fig:uf_32}
\end{figure}

\section{Conclusions and Future Work}
\label{sec:conclusions}

This work studied the exploitation of intrinsic structural properties of the Minimum Set Cover Problem (MSCP) to improve the scalability of GRASP-based heuristics and quality of the solution. 
In particular, we introduced the notion of \emph{universe segmentability}, which captures the existence of independent components induced by element co-occurrence, and proposed a preprocessing strategy based on union--find to identify such components with near-linear theoretical complexity.

By integrating universe segmentation into both sequential and parallel GRASP frameworks, we showed that large MSCP instances can often be decomposed into smaller, independent subinstances that can be solved more efficiently without compromising feasibility. 
Experimental results on standard OR-Library instances confirm that this approach leads to significant reductions in effective problem size, competitive solution quality, and substantial parallel speedups on shared-memory multi-core architectures. At the same time, the experimental analysis reveals that the segmentation phase itself may represent a dominant fraction of the total execution time for large instances, thus becoming a limiting factor for scalability.

Apart from the positive results, this work found cases where the proposed techniques underperformed,  providing important methodological insights. In particular, the current forced universe partitioning strategies based on maximum spanning trees and balanced bi-partitions were found to be ineffective. Although such approaches produce partitions of comparable size, they fail to preserve the independence required for safe MSCP decomposition, as subsets frequently span across partitions. As discussed in the appendix, these results suggest that enforcing balance can be detrimental, and that respecting structural coherence among elements is more critical than achieving evenly sized subinstances.

These observations motivate several directions for future research. First, alternative MST-based or graph-theoretic segmentation strategies that relax balance constraints should be explored, potentially allowing highly unbalanced or recursive decompositions when strong structural dependencies exist. Second, adaptive segmentation schemes that dynamically refine or merge components across GRASP iterations may help reduce preprocessing overhead while preserving decomposition benefits. Finally, extending the proposed framework to weighted MSCP variants, as well as investigating implementations on GPU or large-scale HPC platforms 
could potentially further improve scalability and robustness.

\section*{Acknowledgment}
This work was supported by ANID FONDECYT grants $\#11221029$, $\#1221357$ and by the Patag\'on supercomputer of Universidad Austral de Chile (FONDEQUIP EQM180042). 

\bibliographystyle{cas-model2-names}
\bibliography{sample}

\clearpage
\appendix
\section{Insights and Future Directions on Forced Universe Partitioning}
\label{app:forced_partition}

This appendix reports negative experimental results obtained from a forced universe partitioning strategy for the Minimum Set Cover Problem (MSCP).
The approach is based on graph-theoretic principles and aims to decompose the universe into balanced sub-instances using a maximum spanning tree.
Although conceptually appealing, this strategy was found to be ineffective in practice.
The following analysis clarifies the reasons for this behavior and motivates future research directions.

\subsection{Balanced Bipartition via Maximum Spanning Trees}

Let $(\mathcal{X}, \mathcal{F})$ be an MSCP instance and let $G = (\mathcal{X}, E)$ denote the co-occurrence graph induced by $\mathcal{F}$, where vertices correspond to elements and an edge $(x_i,x_j)$ exists if both elements appear together in at least one subset.
Each edge is assigned a weight reflecting the strength of interaction between elements, defined as
\[
w_{ij} = \left| \{ S \in \mathcal{F} \mid \{x_i, x_j\} \subseteq S \} \right|.
\]

A maximum spanning tree (MST) $T$ of $G$ is computed to preserve the strongest co-occurrence relationships.
To force a decomposition of the universe, one edge of $T$ is removed.
Among all candidate edges, the selected cut is the one that yields the most balanced bipartition, that is, two components of similar size or total weight whenever possible.
Figure~\ref{fig:mst_partition} illustrates this process on a representative MSCP instance.

\begin{figure}
\centering
\includegraphics[width=0.85\textwidth]{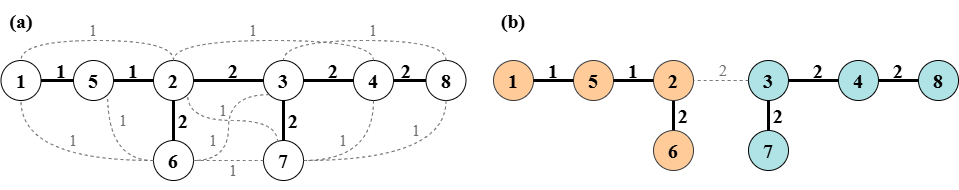}
\caption{Forced balanced bipartition of the \emph{preprocessed} MSCP instance derived from Figure~\ref{fig:example}.
The co-occurrence graph corresponds to the reduced universe obtained after preprocessing, in which four elements have been eliminated, subset $S_6$ has been fixed in the solution, and subset $S_7$ has been discarded.
Edge weights represent the number of subsets in which pairs of remaining elements co-occur.
A maximum spanning tree is constructed and its edges are evaluated as candidate cuts to induce a balanced bipartition.
The partition obtained by cutting edge $(u,v,w)=(2,3,2)$ yields two groups with total weights $W_1=5$ and $W_2=6$.
Despite the apparent balance, the resulting components remain structurally coupled through shared subsets, illustrating why forced balanced partitioning fails to produce independent MSCP subinstances even after preprocessing.}
\label{fig:mst_partition}
\end{figure}

\subsection{Algorithmic Description}

Algorithm~\ref{alg:forced_partition} summarizes the forced universe bipartition strategy based on a maximum spanning tree.

\begin{algorithm}
\footnotesize
\caption{Forced balanced universe bipartition using a maximum spanning tree.}
\label{alg:forced_partition}
\begin{algorithmic}[1]
\Require Universe $\mathcal{X}$, family of subsets $\mathcal{F}$
\State Construct weighted co-occurrence graph $G = (\mathcal{X}, E, w)$
\State Compute a maximum spanning tree $T$ of $G$
\State Select the edge $e^\ast \in T$ whose removal yields the most balanced split
\State Remove $e^\ast$ from $T$, obtaining components $\mathcal{X}_1$ and $\mathcal{X}_2$
\State Define subinstances $(\mathcal{X}_1,\mathcal{F}_1)$ and $(\mathcal{X}_2,\mathcal{F}_2)$
\State \Return $\{(\mathcal{X}_1,\mathcal{F}_1), (\mathcal{X}_2,\mathcal{F}_2)\}$
\end{algorithmic}
\end{algorithm}

\subsection{Empirical Performance of the MST}

The practical consequences of this strategy are illustrated in Figure~\ref{fig:mst_rpd}, which reports the relative degradation in solution quality resulting from the application of a balanced MST-based partitioning.

\begin{figure}
\centering
\includegraphics[width=0.9\textwidth]{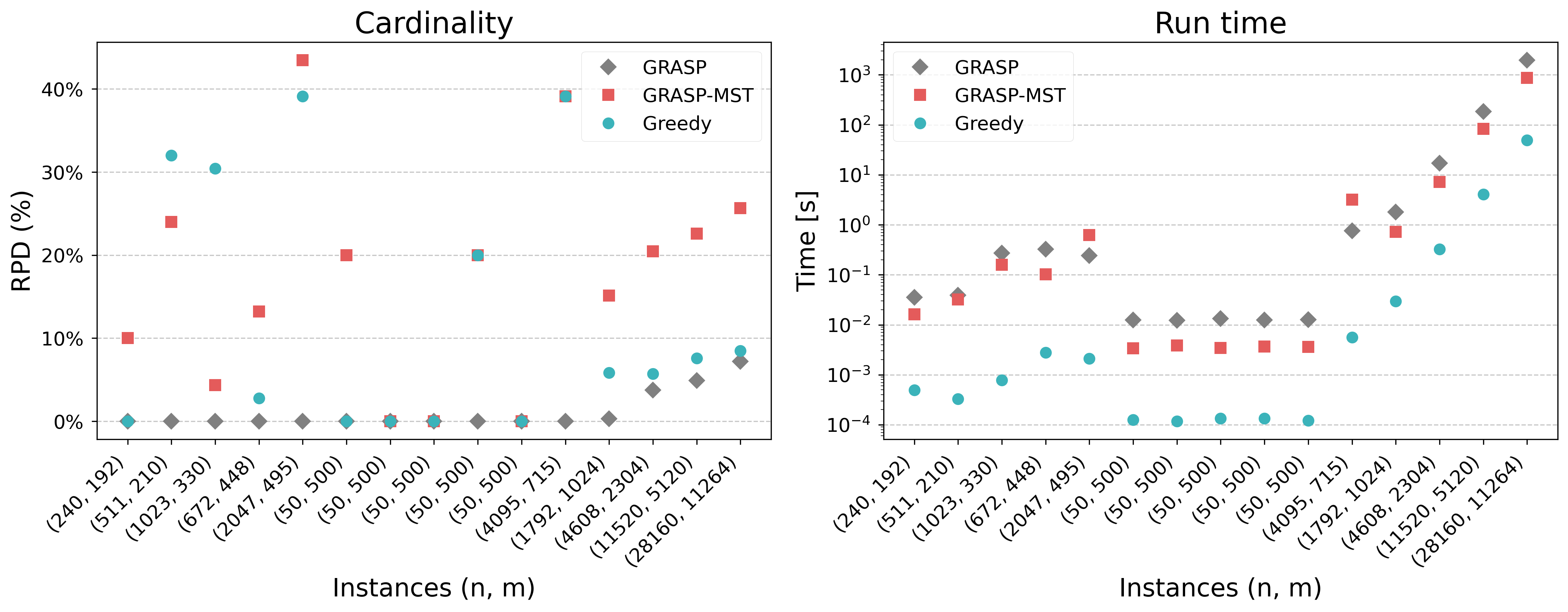}
\caption{Impact of forced balanced universe partitioning based on a maximum spanning tree.
The figure reports the relative degradation in solution quality observed when a balanced MST-based bipartition is applied prior to solving the resulting subinstances independently.
Although the partition produces components of comparable size, the induced structural coupling between elements leads to inferior solutions.}
\label{fig:mst_rpd}
\end{figure}

\subsection{Implications and Future Directions}

These results indicate that the observed performance degradation is not inherently due to the use of maximum spanning trees, but rather to the constraint of enforcing balanced partitions.
In the MSCP, preserving structural coherence among elements appears to be more important than distributing elements evenly across subinstances.
Promising directions for future research include relaxing balance constraints in MST-based partitioning, allowing highly unbalanced cuts when strong dependencies exist.
Additionally, recursive MST-based decomposition strategies may be explored, in which large components are progressively subdivided while structural coupling remains weak.
Such adaptive and structure-aware partitioning strategies may better align with the intrinsic structure of MSCP instances.

\end{document}